\newtheorem{theorem}{Theorem}
\newtheorem{lemma}[theorem]{Lemma}
\newtheorem{definition}[theorem]{Definition}
\newtheorem{proposition}[theorem]{Proposition}
\newenvironment{proof}{\noindent\bf{Proof.}\rm}{\hfill$\blacksquare$\bigskip}
\begin{document}

\title{Why are images smooth?}

\author{Uriel Feige~\thanks{Department of Computer Science and Applied Mathematics, the Weizmann Institute, Rehovot, Israel.
{\tt uriel.feige@weizmann.ac.il}. The author holds the Lawrence G. Horowitz Professorial Chair at the Weizmann Institute.
Work supported in part by the Israel Science Foundation
(grant No. 621/12) and by the I-CORE Program of the Planning and Budgeting Committee and The Israel Science Foundation (grant No. 4/11).}}

\maketitle

\begin{abstract}
It is a well observed phenomenon that natural images are smooth, in the sense that nearby pixels tend to have similar values. We describe a mathematical model of images that makes no assumptions on the nature of the environment that images depict. It only assumes that images can be taken at different scales (zoom levels). We provide quantitative bounds on the smoothness of a typical image in our model, as a function of the number of available scales. These bounds can serve as a baseline against which to compare the observed smoothness of natural images.
\end{abstract}

\section{Introduction}

An {\em image} is a two dimensional array of pixels with $n$ rows and $m$ columns (typically we will take $m=n$), where a pixel $p$ has a real value $x_p \in [0,1]$. (This naturally corresponds to a grey-scale image, though the results extend in a straightforward way to color images, by applying them separately to each of the the basic colors RGB).
It is common wisdom that in natural images nearby pixels tend to have similar values. One may refer to this property as saying that natural images are {\em smooth}. Several hypotheses can be made as to why natural images are smooth. For example:

\begin{enumerate}

\item Our physical world has the property that environments are smooth, and images merely reflect this physical reality.

\item Physical and technological constraints in generating images (for example, properties of lenses) tend to create smooth images, regardless of whether the environment is smooth or not.

\item There is a {\em selection bias} - the portions of the environment that we tend to depict in images are the smooth portions.

\end{enumerate}

In order to test such hypotheses, it is desirable to compare them against a null hypothesis. One baseline for comparison is that of random arrays of pixels. However, we propose a different baseline for comparisons, that we shall refer to as {\em images} (in distinction from {\em natural images}).

We study a formal mathematical model of images that assumes that there are no technological constraints in depicting images of the environment, and assumes that there is no selection bias -- any portion of the environment is equally likely to be depicted. We show that in our formal model, some level of smoothness of images is to be expected, regardless of any assumptions on the physical environment that is being depicted.

The key aspect that our model makes use of is that environments are depicted in various scales. For example, our eyes may focus on objects as small as a few centimeters in length (say, an insect), or sceneries spanning many kilometers (say, a distant mountain range). It is common wisdom that the smoothness of an object depends on the scale at which it is depicted. Consider for example a very large black and white checkerboard pattern. Viewed from a large distance, one pixel in the image will average the value of many checkerboard squares, and hence the image may be uniformly grey (very smooth). Viewed from a very short distance, every square may correspond to many pixels, and then nearby pixels will have the same value, so the image will be very smooth almost everywhere (except on the boundary between squares). However, at some intermediate scale, each square will occupy a small number of pixels (say one pixel, or four pixels), and then adjacent pixels will have very different values and the image will not be considered smooth.

In our study we present a formal model, and within this model we provide quantitative results regarding the effect that having multiple scales has on the typical smoothness of images. Our results imply that a nontrivial level of smoothness of images should be attributed to some universal mathematical principles that have nothing to do with the environment that is being depicted.

\subsection{Related work}

There is a vast body of work on {\em natural image statistics} (see~\cite{book}, for example). Smoothness is a well observed aspect of these statistical properties.  Moreover, natural images tend to have interesting and useful statistical properties that go much beyond smoothness (see~\cite{irani}, for example). A key aspect of our study is that environments are depicted in various scales. This same aspect appears in existing studies of natural images (see~\cite{forest, ruderman, AGM1999, mumford}, for example), though the focus of work in these references is different from ours: it relates to observed scale invariant properties of natural images, and to statistical models that attempt to explain this phenomena. Our current work does not deal with natural images, but rather with images in some abstract mathematical model. Our results can be contrasted against known results on natural images, but do not directly provide new information about natural images.

The techniques used in our proofs are of the form often used in image processing literature and practice. They are strongly related to a wavelet transform~\cite{wavelet} with a Haar basis.

As our results deal with abstract notions of images rather than natural images, the mathematical principles that underlie them are applicable in other settings, and in fact similar principles were used in other settings. Specifically, Theorem~\ref{thm:local} is a variation on a certain {\em local repetition lemma} proved in~\cite{FKT} in the context of sequential decision making, and Proposition~\ref{pro:sqrt} is based on an example given in~\cite{FKT} showing the tightness of the parameters in the local repetition lemma.

\section{A formal model of images}
\label{sec:definitions}

An {\em image} $I_{n,m}$ is a two dimensional array of pixels with $n$ rows and $m$ columns. We shall sometimes omit the subscripts $n,m$ and simply use $I$. A pixel $p$ has a real value $x_p \in [0,1]$.   Numbering the pixels in $I_{n,m}$ by $(i,j)$ with $0 \le i < n$ and $0 \le j < m$, two pixels $(i_1,j_1)$ and $(i_2,j_2)$ are {\em adjacent} if either $i_1=i_2$ and $|j_1 - j_2| = 1$, or $j_1 = j_2$ and $|i_1-i_2| = 1$. Borrowing standard graph theoretic terminology, we refer to a pair of adjacent pixels as an {\em edge} in the image, and we denote the set of edges in the image by $E$. It is not difficult to verify that $|E| = 2nm - n - m$. We remark that the notation $E$ would also be used in order to denote the expectation operator, but the intended use of the notation $E$ (either as set of edges or as expectation) will be clear from the context.

The {\em discrepancy} of two pixels $p$ and $q$ is a measure of how different their value is. We consider two different ways of measuring discrepancy, {\em linear discrepancy} $D_1(p,q) = |x_p - x_q|$ and {\em quadratic discrepancy} $D_2 = (x_p-x_q)^2$. The subscript of $D$ indicates the power to which $|x_p - x_q|$ is raised. $D_1$ is perhaps the more natural of these two measures, but it is mathematically more convenient to work with $D_2$.

\begin{definition}
\label{def:discrepancy}
The {\em local discrepancy} of an image $I_{n,m}$ is the average discrepancy for pairs of adjacent pixels. It is denoted by $LD_1(I) = \frac{1}{|E|}\sum_{(p,q)\in E}  |x_p - x_q|$ and $LD_2(I) = \frac{1}{|E|}\sum_{(p,q)\in E}  (x_p - x_q)^2$. The {\em global discrepancy} of an image $I$ is the average discrepancy over all pairs of pixels whether adjacent or not, including also pairs in which $p$ and $q$ are the same pixel. It is denoted by $GD_1(I) = \frac{1}{n^2m^2}\sum_{p \in I; q \in I}  |x_p - x_q|$ and $GD_2(I) = \frac{1}{n^2m^2}\sum_{p \in I; q \in I}  (x_p - x_q)^2$. In cases where we do not wish to distinguish between linear and quadratic discrepancy, we shall use the notation $LD(I)$ and $GD(I)$ with no subscript.
\end{definition}

The range of possible values of local and global discrepancy is as specified in the following Proposition.

\begin{proposition}
\label{pro:simple}
For every image $I$ the following hold: $0 \le LD(I) \le 1$ and $0 \le GD(I) \le \frac{1}{2}$.
\end{proposition}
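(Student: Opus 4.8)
The plan is to dispose of the easy bounds first and then concentrate on the one nontrivial inequality, namely $GD(I)\le\frac12$. Since $LD_1,LD_2,GD_1,GD_2$ are all averages of nonnegative terms, the lower bounds $LD(I)\ge 0$ and $GD(I)\ge 0$ are immediate in every case. For the upper bound on $LD$, I would note that for any two pixels we have $x_p,x_q\in[0,1]$, so $|x_p-x_q|\le 1$ and hence also $(x_p-x_q)^2\le 1$; averaging these over the edge set $E$ gives $LD(I)\le 1$ for both the linear and the quadratic measure.

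The heart of the proposition is $GD(I)\le\frac12$, and this is where I would spend the effort. Writing $N=nm$ for the number of pixels, I would regard $F=\sum_{p,q}D(x_p,x_q)$ (the sum over all ordered pairs, so that $n^2m^2=N^2$ is the normalizing factor) as a function of the vector of pixel values in $[0,1]^N$. The key structural observation, which I expect to be the main thing one must get right, is that $F$ is \emph{convex}: each summand $|x_p-x_q|$ or $(x_p-x_q)^2$ is a convex function of the pair $(x_p,x_q)$, and a finite sum of convex functions is convex. A convex function on the cube $[0,1]^N$ attains its maximum at a vertex, so it suffices to bound $F$ at configurations in which every pixel value lies in $\{0,1\}$, reducing a continuous optimization to a discrete one.

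At such a vertex, suppose exactly $k$ pixels have value $1$ and the remaining $N-k$ have value $0$. Then $D(x_p,x_q)$ equals $1$ for precisely the ordered pairs whose two endpoints have different values and equals $0$ otherwise, and this holds identically for $D_1$ and $D_2$ because $|1-0|=(1-0)^2=1$. Hence $F=2k(N-k)$, and the elementary bound $k(N-k)\le N^2/4$ gives $F\le N^2/2$. Dividing by $N^2$ yields $GD(I)\le\frac12$ in both cases. As an independent check on the quadratic case, I would note the direct expansion $GD_2(I)=2\,\mathrm{Var}(X)$, where $X$ is the value of a uniformly random pixel, together with the fact that any $[0,1]$-valued random variable has variance at most $\tfrac14$; this recovers the same bound without appeal to convexity and reassures that the vertex reduction has been applied correctly.
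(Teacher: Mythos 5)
Your proof is correct and follows essentially the same route as the paper: nonnegativity and the $LD$ bound are immediate from $x_p\in[0,1]$, and the bound $GD(I)\le\frac12$ is obtained by using convexity of the summands to reduce to $\{0,1\}$-valued configurations and then counting. Your explicit count $2k(N-k)\le N^2/2$ (and the variance cross-check for $GD_2$) is slightly more detailed than the paper's statement that the maximizer must balance the number of $0$- and $1$-pixels, but the underlying argument is the same.
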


\begin{proof}
Nonnegativity follows immediately from Definition~\ref{def:discrepancy}.

$LD(I) \le 1$ holds because for every pixel $p$, $0 \le x_p \le 1$. In a checkerboard pattern with pixel values alternating between~0 and~1 the bound $LD(I) = 1$ is attained. On the same pattern, the bound $GD(I) = \frac{1}{2}$ is attained (if $n$ is even). Convexity of the functions $|x|$ and $x^2$ implies that to maximize $GD(I)$ one needs $x_p \in \{0,1\}$ for every $p$, and one needs the number of 0-pixels to be equal to the number of 1-pixels. In this extreme case $GD(I) = \frac{1}{2}$.
\end{proof}

An image may be smooth in several different senses, and we shall explicitly distinguish between them. One sense of being smooth is that of having low local discrepancy. A consequence of this smoothness is that the image can be compressed: traversing all pixels via some connected path (e.g., row by row in a snakelike fashion), for every new pixel we encounter we already have some prior estimate on its value, based on the pixel preceding it. Another sense of being smooth is by having low global discrepancy. This is a stronger notion than low local discrepancy, due to the following proposition.

\begin{proposition}
\label{pro:half}
For every $n$ by $n$ image $I$, $GD(I) \ge \frac{1}{2}LD(I) - O(\frac{1}{n})$.
\end{proposition}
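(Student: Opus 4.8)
The plan is to prove the equivalent statement $LD(I) \le 2\,GD(I) + O(1/n)$, and in fact to reduce both the linear and the quadratic case to a single inequality on the raw (unnormalized) sums, namely $\sum_{(u,v)\in E} D(u,v) \le 4N\cdot GD(I)$, where $N = n^2$ is the number of pixels. Once this is in hand, dividing by $|E| = 2n^2 - 2n$ gives $LD(I) \le \frac{4N}{|E|}\,GD(I) = \frac{2}{1-1/n}\,GD(I)$; since $GD(I) \le \frac12$ by Proposition~\ref{pro:simple}, the factor $\frac{2}{1-1/n}$ contributes only an additive $O(1/n)$, which yields the claim. The $O(1/n)$ term thus arises entirely from the boundary discrepancy between $|E|$ and $2N$.

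For the linear discrepancy $D_1$, I would obtain $\sum_E D_1 \le 4N\cdot GD_1$ by averaging the triangle inequality over a third pixel. For every pixel $p$ set $g(p) = \frac1N\sum_{w}|x_p - x_w|$, the mean distance from $p$ to the image. For any edge $(u,v)$ and any pixel $w$ we have $|x_u - x_v|\le |x_u - x_w| + |x_w - x_v|$; averaging over $w$ gives $|x_u - x_v|\le g(u)+g(v)$. Summing over edges and grouping by endpoint, $\sum_E |x_u - x_v|\le \sum_v \deg(v)\, g(v)\le 4\sum_v g(v) = 4N\cdot GD_1$, where the last equality is just the definition of $GD_1$ and the degree bound uses that every pixel has at most four neighbours.

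For the quadratic discrepancy $D_2$ the same averaging argument is available, but the inequality $(x_u-x_v)^2\le 2(x_u-x_w)^2 + 2(x_w-x_v)^2$ loses a factor of $2$ and yields only $LD_2 \le 4\,GD_2$, i.e.\ the weaker constant $\frac14$. To recover the sharp constant $\frac12$ I would instead center the values. Writing $\bar x$ for the mean pixel value and $y_p = x_p - \bar x$, the differences are unchanged, so $\sum_E (x_u - x_v)^2 = \sum_E(y_u - y_v)^2\le 2\sum_E(y_u^2 + y_v^2) = 2\sum_v \deg(v)\,y_v^2 \le 8\sum_v y_v^2$. The point of centering is that $\sum_v y_v^2 = N\,\mathrm{Var}(x)$, and the elementary identity $GD_2(I) = 2\,\mathrm{Var}(x)$, obtained by expanding the square in the definition of $GD_2$, converts this into $\sum_E(x_u-x_v)^2\le 8\cdot\frac{N}{2}\,GD_2 = 4N\cdot GD_2$, matching the target bound.

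The main obstacle is precisely this gap between the two measures: the crude triangle-and-averaging argument is tight enough for $D_1$ but gives the wrong constant for $D_2$, so the proof must exploit the cancellation made visible by centering (equivalently, the bound $\lambda_{\max}\le 2\Delta = 8$ on the largest Laplacian eigenvalue of the grid). The constant $\frac12$ cannot be improved: for the checkerboard pattern $LD = 1$ while $GD = \frac12$, so the inequality holds with equality up to the $O(1/n)$ boundary term.
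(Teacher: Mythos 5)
Your proof is correct, and for the harder (quadratic) half it takes a genuinely different route from the paper. For $D_1$ your argument is the paper's argument in summed rather than probabilistic form: average the triangle inequality over a third pixel and absorb the degree/boundary mismatch into the $O(1/n)$ term. For $D_2$, however, the paper only gets the constant $\frac14$ this way and defers the constant $\frac12$ to a separate section, where it is obtained by completing the grid to a torus, decomposing the edge set into four perfect matchings of dominoes, and invoking an equipartition lemma ($GD_2(P)\le GD_2(I)$ for any equipartition $P$, proved by comparing eigenvalues of the block Laplacian to those of the complete-graph Laplacian) together with the identity $LD_2=2GD_2$ for a two-pixel image. Your centering argument --- $\sum_E(y_u-y_v)^2\le 2\sum_v\deg(v)y_v^2\le 8\sum_v y_v^2$ combined with $GD_2=2\,\mathrm{Var}(x)$ --- proves the same spectral fact ($\lambda_{\max}$ of the grid Laplacian is at most $2\Delta$, versus spectral gap $N$ for the complete graph) in two lines, without any matching decomposition or eigenvalue computation, so it is more elementary and self-contained. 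What the paper's longer route buys is the equipartition lemma itself, which is reused later (in the proof of Lemma~\ref{lem:local}); your shortcut would not supply that tool. Your tightness remark (checkerboard: $LD=1$, $GD=\frac12$) matches the paper's Proposition~\ref{pro:simple}.
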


\begin{proof}
Pick a random pixel $p$, and independently, a random pixel $q$ and a random neighbor $q'$ of $q$. By the triangle inequality, $|x_p - x_q| + |x_p - x_{q'}| \ge |x_q - x_{q'}|$. Observe that $GD_1(I)$ exactly equals the expectation of $|x_p - x_q|$, and nearly equals the expectation of $|x_p - x_{q'}|$ (up to an $O(1/n)$ term that is the result of boundary effects). Likewise, $LD_1(I)$ nearly equals the expectation of $|x_q - x_{q'}|$ (up to an $O(1/n)$ term that is the result of boundary effects). Hence averaging over all choices of $p,q,q'$ the inequality $GD_1(I) \ge \frac{1}{2}LD_1(I) - O(\frac{1}{n})$ is proved.

A simple modification to the proof above shows that $GD_2(I) \ge \frac{1}{4}LD_2(I) - O(\frac{1}{n})$. The proof of the stronger claim that $GD_2(I) \ge \frac{1}{2}LD_2(I) - O(\frac{1}{n})$ is deferred to Section~\ref{sec:preliminary}.
\end{proof}

Yet another sense of being smooth is by having a high local correlation coefficient.

\begin{definition}
\label{def:LGR}
The {\em local correlation} of an image $I$ is $LC(I) = \frac{GD_2(I)}{LD_2(I)}$, where $\frac{0}{0}$ is interpreted as being equal to~1.
(Observe that $LD_2(I) = 0$ if and only if $GD_2(I) = 0$.)
\end{definition}

Observe that for an image  $I$ in which the values of pixels are chosen as independent identically distributed (i.i.d.) random variables, one would expect $LC(I) \simeq 1$. An LC value that significantly deviates from~1 is an indication that the image is not just a collection of random pixels, but rather that there are local correlations. High local correlation (LC values larger than~1) relates to the experience of putting together a jigsaw puzzle: it is a good heuristic to try to match together jigsaw pieces of roughly the same color, rather than just trying to match together random pieces. This is because local discrepancy is typically smaller than global discrepancy.

The main claim of this manuscript is that most images are smooth to a noticeable extent. However, the definitions that we gave so far point to the contrary. If an image is just an array of pixels, then a natural interpretation of the term {\em most} is to select the values of these pixels at random in an i.i.d. fashion, with each pixel value distributed uniformly in the range $[0,1]$. This will give LC value of roughly~1 which we do not consider as smooth, and also the local discrepancy would not be low (one expects $LD_1(I) \simeq \frac{1}{3}$ in this case, details omitted).

To be able to substantiate a claim of smoothness, we refine the definition of what an image is. This will lead to natural probability distributions over images that are different from the uniform one stated above, and with respect to these probability distributions most images will be smooth.

\subsection{The probability distribution $D$ over images}

An image, unlike an arbitrary array of pixels, is meant to be an image of ``something". That it, we assume that there is some underlying {\em environment}, and images depict portions of the environment. We shall not make any assumptions about the environment -- it can be arbitrarily complex and random looking. However, we shall make one assumption about images, and this is that the portions of the environment that images depict can be of different sizes. We now present our model more formally.

There is an environment $U$, which is an $N$ by $N$ grid of {\em cells}. For example, the environment can be a large geographical region (say, of size 100km by 100km), and a cell can be of size corresponding to the smallest unit realistically observable by optical means (say, of side-length $10^{-5}$ meters). In the case described above, $N = 10^{10} \simeq 2^{33}$.
Each cell $c$ has {\em intensity} $I_c \in [0,1]$.

Recall that we defined an image to be an $n$ by $m$ two dimensional array
composed of pixels. To simplify of the rest of the presentation, we shall assume that $m=n$. We require $n$ to be considerably smaller than $N$. For example, for images with 4 mega-pixels, $n \simeq 2^{11}$.

In terms of terminology, the terms {\em grid}, {\em cell}, {\em intensity} and $N$ will be associated with environments, whereas the terms {\em array}, {\em pixel}, {\em value} and $n$ will be associated with images.

There is a {\em scale} associated with an image, which is an integer in the range $[0, k-1]$, where $k$ is some fixed integer satisfying $n2^k \le N$. An image with scale $\ell$ describes an $n2^{\ell}$ by $n2^{\ell}$ portion of $U$, where every pixel of the image corresponds to a square of $2^{\ell}$ by $2^{\ell}$ cells of $U$. The value $x_p$ of a pixel $p$ is the average intensity of the cells that it represents, namely, $x_p = 2^{-2\ell} \sum_{c\in p} I_c$. One may think of an image as an $n$-pixel by $n$-pixel photograph of some portion of $U$, taken at a zoom level determined by $\ell$. (This is not meant to be a model that incorporates all optical and technological constraints when describing what a photograph is, but merely a simple approximate model.) Pixels of highest resolution ($\ell = 0$) in our model correspond to single cells in the environment. This convention simplifies the presentation without significantly affecting our results.

Now we describe our probability distribution $D$ that governs
which portion of $U$ is contained in the image. This involves two aspect. One is the {\em scale} of the image: in $D$ the scale is an integer $\ell$ chosen uniformly at random in the range $[0,k-1]$. The other aspect is the {\em location} of the image within $U$. In $D$ the location is a cell $(i,j)$ chosen uniformly at random in the range $0 \le i,j \le N-1$, and the image extends over those cells $(i',j')$ with $i' - i < n2^{\ell}$ modulo $N$ and $j' - j < n2^{\ell}$ modulo $N$. Observe that under this definition, an image that is close to the boundary of $U$, ``wraps around" and continues at the other side of $U$. Hence $U$ is treated as a torus rather than as a grid. This is done for technical reasons, so as not to complicate the analysis by boundary effects. It has very little influence on the end results, because a random image is unlikely to be at the boundary of $U$, and even if it is, only $O(n)$ out of its $n^2$ pixels are at the boundary of $U$.

Observe that under the distribution $D$, every cell of $U$ is equally likely to be part of an image. Each cell of $U$ belongs to at most one pixel in the image, but each pixel in the image in scale $\ell$ contains $2^{2\ell}$ cells. Observe also that $D$ as described above is simply the uniform distribution over all possible images (portions of $U$ that satisfy the size constraints of images).

\begin{definition}
\label{def:DU}
Given an $N$ by $N$ environment $U$ and integers $n,k \ge 2$ satisfying $n2^k < N$, the average local discrepancy of the $U$, denoted by $LD(U,k)$, is the expected local discrepancy of an image sampled from $U$ according to distribution $D$.  Namely:

$$LD(U,k) = E_{I \leftarrow D} LD(I)$$

Analogously,  the average global discrepancy of $U$ is

$$GD(U,k,n) = E_{I \leftarrow D} GD(I)$$

\end{definition}

Observe that given $U$ and $k$, the average local discrepancy is independent on $n$, but global discrepancy does depend on $n$.


\section{Results}

In this section, the terminology used is as defined in Section~\ref{sec:definitions}. In particular, $k$ is the number of scales, and we always assume that $n2^k < N$. For simplicity, we shall assume that $n$ is a power of~2. Throughout, all logarithms are in base~2. Subscripts of~1 or~2 following $D$ denote whether we are referring to linear or quadratic discrepancy.

\begin{theorem}
\label{thm:local}
For every environment $U$ its average local discrepancy satisfies  $LD_2(U,k) \le \frac{1}{k}$.
\end{theorem}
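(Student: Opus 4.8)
The plan is to average over the random location of the image first (which, thanks to the torus convention, removes all boundary effects and makes this averaging exact), then to reduce the problem to a one-parameter family of quantities indexed by the scale, and finally to bound the sum of these quantities by a telescoping argument in the spirit of the Haar transform mentioned in the introduction.

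First I would fix a scale $\ell$ and, for each cell $(i,j)$ of the torus, let $b_\ell(i,j)$ denote the average intensity of the $2^\ell \times 2^\ell$ block of cells anchored at $(i,j)$; this is exactly the value that cell $(i,j)$ acquires as the corner pixel of a scale-$\ell$ image. Because the location is uniform on the torus, averaging $LD_2$ over the location replaces each horizontal edge of the image by the expectation $H_\ell = E_{i,j}[(b_\ell(i,j) - b_\ell(i,j+2^\ell))^2]$ and each vertical edge by $V_\ell = E_{i,j}[(b_\ell(i,j) - b_\ell(i+2^\ell,j))^2]$, with $(i,j)$ uniform. Since the image has equally many horizontal and vertical edges, this gives $E_{\mathrm{location}}[LD_2(I_\ell)] = \frac12(H_\ell + V_\ell)$, and hence $LD_2(U,k) = \frac1k \sum_{\ell=0}^{k-1}\frac12(H_\ell+V_\ell)$. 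It therefore suffices to prove $\sum_{\ell=0}^{k-1}(H_\ell+V_\ell) \le 2$.

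The heart of the argument is a per-scale inequality obtained by coarsening. Let $M_\ell = E_{i,j}[b_\ell(i,j)^2]$. Four scale-$\ell$ blocks $x_1,x_2,x_3,x_4$ (top-left, top-right, bottom-left, bottom-right) tile a single scale-$(\ell+1)$ block whose value is their mean $\bar x = \frac14(x_1+x_2+x_3+x_4)$. The elementary four-point identity $\sum_i x_i^2 - 4\bar x^2 = \frac14 \sum_{i<j}(x_i-x_j)^2$, taken in expectation and combined with translation invariance on the torus (so that $E[x_1^2]=\cdots=E[x_4^2]=M_\ell$ while $E[\bar x^2]=M_{\ell+1}$), produces $4(M_\ell - M_{\ell+1})$ on the left. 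On the right the six pairwise squared differences comprise two horizontal pairs (each contributing $H_\ell$ in expectation), two vertical pairs (each contributing $V_\ell$), and two diagonal pairs; discarding the nonnegative diagonal terms yields $\frac12(H_\ell+V_\ell) \le 4(M_\ell-M_{\ell+1})$.

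Summing this over $\ell = 0,\dots,k-1$ telescopes to $\sum_\ell \frac12(H_\ell+V_\ell) \le 4(M_0 - M_k)$. Writing $\mu$ for the mean cell intensity, we have $M_k \ge \mu^2$ (second moment dominates squared mean) and $M_0 = E[I_c^2]$, so $M_0 - M_k \le E[I_c^2]-\mu^2 = \mathrm{Var}(I_c) \le \frac14$, the last step being the maximal variance of a random variable supported in $[0,1]$. Thus $\sum_\ell \frac12(H_\ell+V_\ell) \le 1$, and dividing by $k$ proves the claim. I expect the main obstacle to be the two-dimensional coarsening step: in one dimension a single horizontal merge already advances the scale, so the telescoping is immediate, whereas here a scale advances only under a four-to-one merge, forcing one to handle horizontal and vertical differences simultaneously and to dispose correctly of the diagonal contributions — which is precisely what yields the clean constant $\frac14$ and hence the bound $\frac1k$.
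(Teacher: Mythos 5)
Your proof is correct, and while it rests on the same multiresolution (Haar-type) idea as the paper's argument, the execution is genuinely different. The paper fixes a random $2^{k-1}\times 2^k$ window, builds an explicit partial orthonormal system (one ``domino'' vector per adjacent pixel pair at each scale, plus the constant vector), and invokes Bessel's inequality to bound the total detail energy by $w^2-\mu^2\le\frac14$; vertical adjacencies are handled separately by symmetry. You instead exploit torus translation invariance to reduce everything to the per-scale quantities $H_\ell,V_\ell,M_\ell$, and replace the basis construction by the elementary four-point identity $\sum_i x_i^2-4\bar x^2=\frac14\sum_{i<j}(x_i-x_j)^2$ applied to a $2\times2$ merge, so that orthogonality across scales becomes the telescoping $\sum_\ell(M_\ell-M_{\ell+1})=M_0-M_k\le\mathrm{Var}(I_c)\le\frac14$. (Your telescoping identity is precisely what Parseval for the paper's basis encodes, which is why both routes land on the same constant.) What your version buys is a shorter, more elementary and fully two-dimensional argument -- horizontal and vertical edges are bounded simultaneously, with the diagonal terms of the four-point identity discarded as slack -- at the cost of being slightly less modular: the paper's window/basis machinery is reused verbatim to prove Lemma~\ref{lem:local} (the per-window refinement needed for Theorem~\ref{thm:rho}), whereas your argument would need to be restated with the telescoping run from $M_\ell$ to $M_{\ell+\log n}$ and with $M_\ell - M_{\ell+\log n}$ related to $GD_2(\ell)$ to recover that lemma. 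All the individual steps you use check out: the location-averaging identity $E[LD_2(I_\ell)]=\frac12(H_\ell+V_\ell)$ is exact on the torus since the image has equally many horizontal and vertical edges and each pixel's anchor is uniform, and $M_k\ge\mu^2$ follows from Jensen since the mean of $b_k$ over a uniform anchor is $\mu$.
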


\begin{proposition}
\label{pro:global}
When $\log n \le k$, there are environment $U$ for which the average global discrepancy satisfies $GD_2(U,k,n) \ge \frac{\log n}{2k}$, up to low order terms that tend to~0 as $n$ grows.
\end{proposition}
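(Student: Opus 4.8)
The plan is to exhibit a single explicit environment, namely a black--white checkerboard, and to show that it already attains the claimed bound. The starting point is the elementary identity $GD_2(I) = \frac{2}{n^2}\sum_p x_p^2 - \frac{2}{n^4}\bigl(\sum_p x_p\bigr)^2$, that is, $GD_2(I)$ equals twice the variance of the pixel values of $I$; in particular, as already observed in the proof of Proposition~\ref{pro:simple}, any pixel array split exactly evenly between the values $0$ and $1$ has $GD_2(I)=\frac12$. So it suffices to find $U$ for which, on many of the $k$ scales, a random image is perfectly balanced between $0$ and $1$. Concretely I would take $U$ to be the checkerboard on the $N\times N$ torus whose monochromatic squares have side $2^{k-1}$ cells: cell $(a,b)$ has intensity $1$ if $\lfloor a/2^{k-1}\rfloor + \lfloor b/2^{k-1}\rfloor$ is even and $0$ otherwise. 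This pattern has period $2^k$ in each coordinate, and since we may assume $2^k \mid N$ the torus carries it seamlessly, so that the wrap-around has no effect whatsoever.

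Next I would analyze a fixed scale $\ell\in[0,k-1]$. Because $\ell\le k-1$, the pixel side $2^\ell$ divides the square side $2^{k-1}$, so every pixel lies inside a single monochromatic square and hence takes a value in $\{0,1\}$; the resulting image is itself a checkerboard whose squares have side $2^{k-1-\ell}$ pixels, i.e.\ spatial period $2^{k-\ell}$ pixels. The key is an alignment fact: as soon as $k-\ell\le\log n$, equivalently $\ell\ge k-\log n$, the window side $n=2^{\log n}$ is an exact integer multiple of the period $2^{k-\ell}$. Writing the value of pixel $(r,c)$ as $(-1)^{\lfloor r/P\rfloor+\lfloor c/P\rfloor}$ with $P=2^{k-1-\ell}$, the signed count over the window factors as a product of two one-dimensional sums, each taken over a whole number of periods and therefore zero. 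Thus the image contains exactly as many $0$-pixels as $1$-pixels \emph{regardless of the random location}, so $GD_2(I)=\frac12$ for every such $\ell$ and every offset, giving $E_{I\leftarrow D}\bigl[GD_2(I)\mid \text{scale}=\ell\bigr]=\frac12$. The favourable condition $\ell\ge k-\log n$ holds for exactly the $\log n$ integers $\ell=k-\log n,\dots,k-1$, all lying in $[0,k-1]$ precisely because $\log n\le k$. Averaging over the uniformly random scale and lower-bounding the remaining scales by $0$ then yields $GD_2(U,k,n)=\frac1k\sum_{\ell=0}^{k-1}E[GD_2(I)\mid \text{scale}=\ell]\ge \frac1k\cdot\log n\cdot\frac12=\frac{\log n}{2k}$, as required.

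The step that needs the most care---and the only place where the powers of two are genuinely used---is the claim that on the favourable scales the global discrepancy is insensitive to the random location. Once the window side is a whole multiple of the period this is immediate, but it is exactly what breaks for $\ell<k-\log n$: there the window is narrower than a single period, a typical image sits inside one square and is almost constant, and its contribution to $GD_2$ is negligible, which is why those scales are simply dropped. Under the standing assumption that $n$ is a power of $2$ the estimate above is in fact exact; the low-order terms in the statement are only needed if one relaxes this assumption (so that $\log n$ must be rounded) or accounts less carefully for the handful of transitional scales near $\ell\approx k-\log n$.
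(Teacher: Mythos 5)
Your construction and overall strategy are essentially the paper's (the paper uses monochromatic mega-cells of side $2^k$ rather than $2^{k-1}$, but the idea is identical), and your variance identity and the count of favourable scales are fine. However, there is a genuine gap at the step ``every pixel lies inside a single monochromatic square and hence takes a value in $\{0,1\}$.'' In the model the location of the image is a uniformly random \emph{cell} $(i,j)$, so the pixel grid is not aligned with the checkerboard: a pixel at scale $\ell$ occupies cells $[i+r2^{\ell}, i+(r+1)2^{\ell}) \times [j+c2^{\ell}, j+(c+1)2^{\ell})$, and this block crosses a boundary between monochromatic squares whenever a multiple of $2^{k-1}$ falls in the interior of one of these intervals, which happens with probability about $2^{\ell-(k-1)}$ per coordinate. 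At your top scale $\ell=k-1$ essentially \emph{every} pixel straddles four squares; writing the pixel value as $\frac12+\frac12 u_r v_c$ with $u_r,v_c$ triangle-wave averages of the $\pm1$ pattern, one finds $E[GD_2(I)]=\frac{1}{18}$ at that scale, not $\frac12$. So the claims that ``$GD_2(I)=\frac12$ for every such $\ell$ and every offset'' and that the estimate is exact when $n$ is a power of two are false. The alignment of the \emph{window} with the period only guarantees that the mean pixel value is exactly $\frac12$; it does not make the pixels $0/1$-valued.

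The conclusion survives, but you must account for the straddling pixels: at scale $\ell$ the deficit of $E[GD_2(\ell)]$ below $\frac12$ is $O(2^{\ell-k})$, and summing this geometric series over the favourable scales loses only $O(1)$ out of $\frac{\log n}{2}$, a relative error of $O(1/\log n)$ that is absorbed by the ``low order terms'' in the statement. This is precisely the point the paper's (admittedly informal) proof is careful about: it takes mega-cells of side $2^k$, twice the largest pixel side, so that even at $\ell=k-1$ a constant fraction of pixels lie entirely inside one mega-cell, and it notes that this fraction tends to $1$ exponentially fast as $\ell$ decreases.
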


Let us contrast Theorem~\ref{thm:local} with Proposition~\ref{pro:global}. Suppose that images can correspond to objects as small as one centimeter in the environment $U$ (say, a photo of an insect), up to objects as large as ten kilometers (say, a photo of a landscape). This gives $k \simeq \log 10^6 \simeq 20$ different scales for images. Suppose that every image has 1024 by 1024 pixels. Then $n = 2^{10}$ and $\log n = 10$. Proposition~\ref{pro:global} shows that for a random image (sampled from $D$), the average global discrepancy might be as high as $\frac{\log n}{2k} = \frac{1}{4}$. Theorem~\ref{thm:local} shows that the average local discrepancy is at most $\frac{1}{k} = \frac{1}{20}$.

Theorem~\ref{thm:local} concerns quadratic discrepancy and not linear discrepancy. Hence possibly $LD_1(n,k) \simeq \frac{1}{\sqrt{k}}$, even though $LD_2(n,k) \le \frac{1}{k}$.

\begin{proposition}
\label{pro:sqrt}
There is some constant $c > 0$ such that for every $k$, there is an environment $U$ for which $LD_1(U,k) \ge \frac{c}{\sqrt{k}}$.
\end{proposition}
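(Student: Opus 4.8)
The plan is to exhibit a single environment whose quadratic local discrepancy is spread essentially evenly across all $k$ scales, while its linear local discrepancy on each scale is as large as concentration permits. Writing $LD^{(\ell)}$ for the location-averaged local discrepancy of scale-$\ell$ images, we have $LD(U,k)=\frac1k\sum_{\ell=0}^{k-1}LD^{(\ell)}$. By Cauchy--Schwarz $LD_1^{(\ell)}\le\sqrt{LD_2^{(\ell)}}$, and hence $\sum_\ell LD_1^{(\ell)}\le\sqrt{k\sum_\ell LD_2^{(\ell)}}\le\sqrt k$ using Theorem~\ref{thm:local} (which gives $\sum_\ell LD_2^{(\ell)}\le 1$). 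So the target $LD_1(U,k)\ge c/\sqrt k$ is equivalent to $\sum_\ell LD_1^{(\ell)}=\Omega(\sqrt k)$, which forces near-equality throughout: one needs (i) $LD_2^{(\ell)}\approx 1/k$ on every scale, and (ii) on each scale the value $|x_p-x_q|$ concentrated around a single magnitude $\approx 1/\sqrt k$, rather than taking the values $0$ and $\Theta(1)$ (which would inflate $LD_2^{(\ell)}$ and violate Theorem~\ref{thm:local}). I am therefore looking for a bounded environment whose fluctuations have amplitude $\Theta(1/\sqrt k)$ at each of the $k$ octaves simultaneously.

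To build it I would superimpose square-wave (Haar) patterns, one per scale. It suffices to work in one dimension and use the striped environment $I_{(i,j)}=f(i)$: then horizontal edges carry no discrepancy and vertical edges carry the one-dimensional discrepancy, costing only a factor of about $2$. For the profile I would take $f(c)=\frac12+a\sum_{s=0}^{k-1}\sigma_s\chi_s(c)$, where $\chi_s$ is the $\pm1$ square wave that flips every $2^s$ cells, the $\sigma_s$ are signs to be chosen, and $a=\Theta(1/\sqrt k)$. A direct computation of block averages shows that at scale $\ell$ the wave $\chi_s$ affects the difference of two adjacent pixels exactly when their blocks straddle a sign change of $\chi_s$: this happens for every edge when $s=\ell$, with frequency $2^{\ell-s}$ when $s>\ell$, and never (after averaging) when $s<\ell$. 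Thus each scale-$\ell$ edge has discrepancy $2a$ times a signed count of straddled waves that always contains the $s=\ell$ term.

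The main obstacle is the constraint $f(c)\in[0,1]$, i.e.\ $|a\sum_s\sigma_s\chi_s(c)|\le\frac12$ for every cell $c$. The trivial bound $a\sum_s|\chi_s(c)|=ak=\Theta(\sqrt k)$ is far too large, so the $k$ waves must interfere \emph{incoherently}: I need signs $\sigma_s$ for which $|\sum_s\sigma_s\chi_s(c)|=O(\sqrt k)$ uniformly in $c$, which is precisely what makes $a=\Theta(1/\sqrt k)$ feasible. This is the delicate point. Independent random signs do not suffice: over the $N\ge 2^{k}$ cells a union bound guarantees cancellation only down to $O(\sqrt{k\log N})=O(k)$, forcing $a=\Theta(1/k)$ and erasing the entire gain; and dyadically aligned signs are worse, since the cell whose address is all-ones realizes every wave at its peak at once. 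The resolution is a deterministic, low-discrepancy sign pattern of Rudin--Shapiro type whose partial sums are provably $O(\sqrt k)$; this is exactly the role of the tightness example from~\cite{FKT}.

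It then remains to check that this deterministic choice of signs does not spoil the linear discrepancy, and here the estimate is pleasantly robust. A constant fraction of scale-$\ell$ edges straddle the on-scale wave $\chi_\ell$ but \emph{no} coarser wave (this event depends only on location, not on the signs), and on every such edge the discrepancy equals exactly $2a$; hence $LD_1^{(\ell)}\ge\Omega(a)=\Omega(1/\sqrt k)$ regardless of how the $\sigma_s$ were chosen, and averaging over scales gives $LD_1(U,k)=\Omega(1/\sqrt k)$. Finally I would dispose of the torus wrap-around and the $O(1/n)$ boundary effects exactly as in Proposition~\ref{pro:half}, and collect constants to produce an absolute $c>0$ valid for every $k$. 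I expect the incoherent boundedness step (the uniform $O(\sqrt k)$ bound on the superposition) to be the crux; the discrepancy bookkeeping above is routine and, as noted, insensitive to the sign pattern.
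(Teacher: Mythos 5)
Your construction---a one-dimensional striped environment $\frac12+a\sum_{s}\sigma_s\chi_s$ built from the $k$ dyadic square waves with per-wave amplitude $a=\Theta(1/\sqrt k)$---is exactly the paper's, and your accounting of which waves affect a scale-$\ell$ edge matches the paper's computation. But the step you yourself identify as the crux is not merely unproven; it is impossible. Since $\chi_s(c)$ is determined by bit $s$ of $c$, and $c$ ranges over all residues modulo $2^k$ (the model forces $N\ge n2^k$), the map $c\mapsto(\chi_0(c),\dots,\chi_{k-1}(c))$ is onto $\{\pm1\}^k$. Hence for \emph{every} choice of signs there is a cell $c^*$ with $\chi_s(c^*)=\sigma_s$ for all $s$, giving $\sum_s\sigma_s\chi_s(c^*)=k$; the all-ones-address problem you attribute to ``dyadically aligned signs'' afflicts every signing. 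The $\chi_s$ form a Rademacher (Sidon) system, for which the sup norm of $\sum_s\sigma_s\chi_s$ is always exactly $\sum_s|\sigma_s|=k$; Rudin--Shapiro cancellation applies to dense polynomials $\sum_{j<2^m}\epsilon_j z^j$, not to a lacunary sum of $k$ terms. So no deterministic low-discrepancy signing exists, and with $a=\Theta(1/\sqrt k)$ your environment cannot be kept inside $[0,1]$.

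The paper's escape is not a clever signing but \emph{truncation}. Take all signs $+1$, so that a cell's intensity is the partial-sum walk of its $\pm1$ bits, and clip the walk the first time a prefix sum leaves $[-2\sqrt k,\,2\sqrt k]$ (the paper writes $\sqrt{\log k}$, evidently a typo for $\sqrt k$, since otherwise neither Kolmogorov's bound nor the final constant comes out right). Kolmogorov's maximal inequality bounds the fraction of clipped (``extreme'') cells by $\frac14$, and because the clipped walk is a stopped martingale, any pixel whose defining prefix has not yet been stopped still has value exactly equal to that prefix sum. Two horizontally adjacent scale-$\ell$ pixels whose prefixes differ in a single bit then have discrepancy exactly $2$, i.e.\ $\Theta(1/\sqrt k)$ after rescaling the range $[-2\sqrt k,2\sqrt k]$ into $[0,1]$, and this happens for a constant fraction of edges. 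So the missing idea is: do not try to make the superposition uniformly bounded; bound it by fiat where it is too large, and use the maximal inequality plus the martingale property to show that this surgery leaves a constant fraction of edges carrying the full on-scale discrepancy. Your final paragraph (discrepancy $2a$ on a constant fraction of scale-$\ell$ edges) then goes through essentially verbatim with ``choice of signs'' replaced by ``clipping''.
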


The theme of the next theorem is that unless local correlation (in the sense of Definition~\ref{def:LGR}) is significant, then $O(1/k)$ bounds apply not only to quadratic discrepancy, but also to linear discrepancy.

For an environment $U$ and $0 \le \ell \le k-1$, let $LD(\ell)$ denote the average local discrepancy taken only over images of scale $\ell$, and let $GD(\ell)$ denote the average global discrepancy taken only over images of scale $\ell$.

\begin{theorem}
\label{thm:rho}
For an environment $U$ and $1 < \alpha < \frac{\log n}{2}$, suppose that for every $0 \le \ell \le k-1$, $GD_2(\ell) \le \alpha LD_2(\ell)$. Let $0 < p < 1$ be such that $\frac{1 - p^{\log n}}{1-p} = 2\alpha$. Then $\sum_{\ell = 0}^{k-1} LD_1(\ell) \le \frac{1 + \sqrt{p}}{\sqrt{1-p}}$. In particular, as $n$ grows, the upper bound on $LD_1(U,k)$ tends to $\frac{\sqrt{2\alpha} + \sqrt{2\alpha - 1}}{k}$.
\end{theorem}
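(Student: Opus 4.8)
The plan is to pass from the scale-averaged quantities in the theorem to per-scale quantities and reduce everything to a one-dimensional optimization. Writing $a_\ell = LD_2(\ell)$ (extending the definition of $LD_2(\ell)$ to all relevant scales through the environment's variogram), Theorem~\ref{thm:local} gives $\sum_{\ell=0}^{k-1} a_\ell \le 1$, since the scale is uniform in $D$ and hence $LD_2(U,k) = \frac{1}{k}\sum_{\ell=0}^{k-1}LD_2(\ell)$. Applying the Cauchy--Schwarz inequality $E[|X|]\le\sqrt{E[X^2]}$ edge-by-edge gives $LD_1(\ell)\le\sqrt{a_\ell}$, so it suffices to bound $\sum_{\ell=0}^{k-1}\sqrt{a_\ell}$. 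The hypothesis $GD_2(\ell)\le\alpha LD_2(\ell)$ then has to be converted into a usable constraint on the sequence $(a_\ell)$.

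The key structural step is to express the global discrepancy at a fixed scale in terms of the local discrepancies at that scale and all coarser scales, via an iterated Haar-type variance decomposition. After averaging over the random image location, $GD_2(\ell) = 2\bar V(\ell)$, where $\bar V(\ell)$ is the expected within-window variance of the $n\times n$ array of scale-$\ell$ pixels. Grouping the pixels into $2\times 2$ blocks and applying the law of total variance, the within-window variance splits into the average within-block variance plus the variance of the block means; the block means are exactly the scale-$(\ell+1)$ pixels, while the within-block variance dominates $\frac{1}{16}$ times the sum of the four horizontal/vertical adjacent squared differences (the two diagonal pairs are nonnegative and are discarded). Averaging over location turns these into $\gamma^h_\ell(2^\ell)$ and $\gamma^v_\ell(2^\ell)$, whose half-sum is $LD_2(\ell)$, so each level contributes at least $\frac14 LD_2(\ell)$. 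Iterating $\log n$ times down to a single pixel yields $GD_2(\ell)\ge\frac12\sum_{j=0}^{\log n - 1}LD_2(\ell+j)$, and combining with the hypothesis gives the sliding-window constraint $\sum_{j=0}^{\log n - 1}a_{\ell+j}\le 2\alpha\, a_\ell$ for every $0\le\ell\le k-1$.

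It remains to maximize $\sum_{\ell=0}^{k-1}\sqrt{a_\ell}$ subject to $\sum_\ell a_\ell\le 1$ and these window constraints. Writing $S_\ell = \sum_{i\ge\ell}a_i$ for the tail sums, the constraint reads $S_\ell - S_{\ell+\log n}\le 2\alpha(S_\ell - S_{\ell+1})$, which, with the definition $\frac{1-p^{\log n}}{1-p}=2\alpha$, forces the tails to decay essentially like $S_\ell\le p^\ell S_0$; the geometric sequence $a_\ell = (1-p)p^\ell S_0$ saturates every constraint. I would bound the objective by Cauchy--Schwarz against the weights $c_\ell = (1-\sqrt p)p^{\ell/2}$, which satisfy $\sum_\ell c_\ell = 1$, obtaining $\left(\sum_\ell\sqrt{a_\ell}\right)^2\le\frac{1}{1-\sqrt p}\sum_\ell a_\ell p^{-\ell/2}$; an Abel summation using $S_\ell\le p^\ell S_0$ then gives $\sum_\ell a_\ell p^{-\ell/2}\le(1+\sqrt p)S_0\le 1+\sqrt p$. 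Together these yield $\sum_{\ell=0}^{k-1}\sqrt{a_\ell}\le\frac{1+\sqrt p}{\sqrt{1-p}}$, and dividing by $k$ gives the bound on $LD_1(U,k)$. The limiting value $\frac{\sqrt{2\alpha}+\sqrt{2\alpha-1}}{k}$ then follows by letting $n\to\infty$, so that $p^{\log n}\to 0$, $1-p\to\frac{1}{2\alpha}$, and simplifying.

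The main obstacle is the exact finite-$n$ optimization: the tail-sum inequality cleanly yields $S_{\ell+1}\le pS_\ell$ only in the limit $\log n\to\infty$, whereas for finite $\log n$ the trailing term $S_{\ell+\log n}$ must be carried through, and one has to verify that the geometric sequence is genuinely \emph{extremal} (by a smoothing/exchange argument or by exhibiting dual multipliers), which is precisely what produces the stated constant with the finite-window value of $p$. A secondary technical point is the two-dimensional variance bookkeeping in the structural step: since each coarsening averages four pixels and the decomposition also sees diagonal differences, I must check that discarding those nonnegative terms still leaves the constant $\frac12$ intact, as it does.
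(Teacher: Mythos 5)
Your reduction to the optimization problem is sound and matches the paper's: the bound $LD_1(\ell)\le\sqrt{LD_2(\ell)}$, the budget $\sum_\ell LD_2(\ell)\le 1$ from Theorem~\ref{thm:local}, and the sliding-window constraint $\sum_{j=0}^{\log n-1}a_{\ell+j}\le 2\alpha\, a_\ell$ are exactly the paper's constraints 1--3. Your derivation of the window constraint by iterating the law of total variance over $2\times 2$ blocks is a valid alternative to the paper's domino-partial-basis (Parseval) argument for Lemma~\ref{lem:local}, and the constant bookkeeping checks out ($\frac{1}{16}$ of the four axis-adjacent squared differences per block gives $\frac{1}{4}LD_2(\ell+j)$ per level, and $GD_2=2\mathrm{Var}$ turns this into the factor $\frac12$). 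The genuine gap is at the heart of the matter: your entire finish (Cauchy--Schwarz against the weights $(1-\sqrt p)p^{\ell/2}$ followed by Abel summation, which I have checked and which does deliver $\sum_\ell\sqrt{a_\ell}\le\frac{1+\sqrt p}{\sqrt{1-p}}$) rests on the tail bound $S_\ell\le p^\ell S_0$ with $p$ defined by the \emph{finite}-window equation $\frac{1-p^{\log n}}{1-p}=2\alpha$, and you concede that you can only get this ``cleanly'' in the limit $\log n\to\infty$, deferring the finite case to an unspecified smoothing or duality argument. That deferred step is precisely where the stated constant comes from, and it is where the paper spends the bulk of its proof (the two-case exchange argument establishing that the geometric sequence is the optimum of the mathematical program). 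As written, the proposal does not prove the finite-$n$ statement.

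The gap is, however, closable within your own framework, and more cheaply than by the paper's exchange argument. The single-step bound $S_{\ell+1}\le pS_\ell$ follows by downward induction on $\ell$, starting from the last nonzero term $a_m$ (where $S_{m+1}=0$): assuming $S_{j+1}\le pS_j$ for all $j>\ell$, one gets $S_{\ell+L}\le p^{L-1}S_{\ell+1}$ with $L=\log n$, and substituting this into $2\alpha(S_\ell-S_{\ell+1})\ge S_\ell-S_{\ell+L}$ yields $S_{\ell+1}\le\frac{2\alpha-1}{2\alpha-p^{L-1}}\,S_\ell=pS_\ell$, the last equality being exactly the defining equation $\frac{1-p^L}{1-p}=2\alpha$ rearranged. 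With that lemma supplied, your Cauchy--Schwarz/Abel computation replaces the paper's primal exchange argument by a shorter dual-certificate argument and yields the same constant. But you did not supply this induction, and without it the proof is incomplete at its central step.
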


\section{Proofs}

\subsection{Some preliminary results}
\label{sec:preliminary}

The following propositions collect some properties of discrepancy.

\begin{proposition}
\label{pro:largegap}
For every image $I$,  $LD_2(I) \ge \Omega(\frac{GD_2(I)}{n^2})$.
There are images with $LD_2(I) \le O\left(\frac{GD_2(I)}{n^2}\right)$.
\end{proposition}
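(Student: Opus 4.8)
The plan is to treat the two directions separately: a canonical-paths argument (combined with Cauchy--Schwarz) for the lower bound, and an explicit gradient image for the tightness. Throughout I write $d_e = (x_a - x_b)^2$ for the discrepancy on an edge $e = (a,b)$, so that $\sum_{e \in E} d_e = |E| \cdot LD_2(I)$ with $|E| = \Theta(n^2)$, and I use that $n^4 \cdot GD_2(I) = \sum_{p,q}(x_p - x_q)^2$.

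For the lower bound $LD_2(I) \ge \Omega(GD_2(I)/n^2)$, I would write each global pair discrepancy as a telescoping sum of signed edge differences along a path and apply Cauchy--Schwarz. Concretely, for pixels $p = (i_1,j_1)$ and $q = (i_2,j_2)$ I route an ``L-shaped'' canonical path that first moves horizontally along row $i_1$ from column $j_1$ to $j_2$, then vertically along column $j_2$. This path has length $|i_1 - i_2| + |j_1 - j_2| < 2n$, so Cauchy--Schwarz gives $(x_p - x_q)^2 \le 2n \sum_{e \in \mathrm{path}(p,q)} d_e$. Summing over all $n^4$ ordered pairs and exchanging the order of summation, each edge $e$ contributes $d_e$ times the number of canonical paths through $e$ (its \emph{congestion}). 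The crux is to bound this congestion by $O(n^3)$: a fixed horizontal edge between columns $c$ and $c+1$ in row $r$ is used only in the horizontal phase, which forces $i_1 = r$ and $\min(j_1,j_2) \le c < \max(j_1,j_2)$, giving at most $n \cdot 2(c+1)(n-1-c) \le n^3/2$ pairs; a symmetric count applies to vertical edges. Plugging this in yields $\sum_{p,q}(x_p - x_q)^2 \le 2n \cdot (n^3/2) \sum_e d_e = n^4 \sum_e d_e$, hence $GD_2(I) \le \sum_e d_e = |E| \cdot LD_2(I) \le 2n^2 \, LD_2(I)$, which rearranges to $LD_2(I) \ge GD_2(I)/(2n^2) = \Omega(GD_2(I)/n^2)$. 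I expect the congestion count to be the only delicate step; the rest is bookkeeping.

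For the tightness I would exhibit the linear ramp $x_{(i,j)} = i/(n-1)$, which depends only on the row. Here every horizontal edge has $d_e = 0$ and every vertical edge has $d_e = 1/(n-1)^2$, so $\sum_e d_e = \Theta(1)$ and $LD_2(I) = \Theta(1/n^2)$. On the other hand $GD_2(I) = \frac{1}{n^4} \sum_{p,q} \left( \frac{i_p - i_q}{n-1} \right)^2$, and using $\sum_{a,b=0}^{n-1}(a-b)^2 = \Theta(n^4)$ this evaluates to a positive constant (namely $\frac{1}{6}$ in the limit). Thus $GD_2(I)/n^2 = \Theta(1/n^2)$ matches $LD_2(I)$ up to a constant factor, establishing $LD_2(I) \le O(GD_2(I)/n^2)$ and completing the proof.
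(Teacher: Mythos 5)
Your proposal is correct and follows essentially the same route as the paper: an L-shaped canonical-path argument (Cauchy--Schwarz over a path of length at most $2n$) for the lower bound, and a row-wise linear ramp for the matching example. The only difference is presentational -- you carry out an explicit congestion count of $O(n^3)$ paths per edge, which is exactly the detail the paper compresses into the assertion that the path-sampling procedure distorts the uniform distribution on edges by at most a constant factor.
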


\begin{proof}
One can lower bound $LD_2(I)$ as a function of $GD_2(I)$ by the following procedure for sampling an adjacent pair of pixels. First, sample two pixels $p$ and $q$ uniformly at random (as done for computing $GD_2(I)$). Then follow a canonical path from $p$ to $q$, first going along the row of $p$ until the column of $q$ is reached, and then along the column of $q$ until $p$ is reached. Thereafter, a random adjacent pair of pixels $(u,v)$ along this path is chosen. As the path is at most of length $2n$, the triangle inequality for distances implies that $E[(x_u - x_v)^2] \ge \frac{(x_p - x_q)^2}{4n^2}$ (the worst case is when the value of every two adjacent pixels along the path differs by $\frac{|x_p - x_q|}{2n}$). The above procedure for sampling adjacent pixels distorts the uniform distribution over adjacent pixels, but only to limited extent. A pair of adjacent pixels can increase its probability of being sampled (compared to the uniform probability) by at most a constant factor. Hence also with respect to the uniform distribution over pairs of adjacent pixels we must have  $LD_2(I) \ge \Omega(\frac{GD_2(I)}{n^2})$. (The constants in this proof can be improved by a more careful analysis.)

An example of an image $I$ with $GD_2(I) = \Omega(1)$ and $LD_2(I) = O(\frac{1}{n^2})$ is the following: for every $0 \le i < n$, all pixels in row $i$ have the same value $\frac{i}{n}$.
\end{proof}

For linear discrepancy, the bounds in Proposition~\ref{pro:largegap} should be changed to $LD_1(I) \ge \Omega(\frac{GD_1(I)}{n})$ (proof omitted). In any case, Proposition~\ref{pro:largegap} shows that local discrepancy can be much smaller than global discrepancy. In contrast, global discrepancy cannot be much smaller than local discrepancy, as shown by Proposition~\ref{pro:half}. We now develop some machinery for proving Proposition~\ref{pro:half} for the case of quadratic discrepancy.

\begin{proposition}
\label{pro:2pixels}
Given a~2 by~1 image $I$ composed only of two adjacent pixels, $LD(I) = 2GD(I)$.
\end{proposition}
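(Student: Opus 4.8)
The plan is to treat this as a direct computation, since the image has just two pixels and a single edge. Let me denote the values of the two pixels by $a$ and $b$. Because there is exactly one pair of adjacent pixels, $|E| = 1$, and so by Definition~\ref{def:discrepancy} the local discrepancy is simply $LD(I) = D(a,b)$, where $D$ stands for either the linear measure $|a-b|$ or the quadratic measure $(a-b)^2$, depending on which version we consider.

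For the global discrepancy I would be careful about exactly which pairs enter the sum and how the normalization behaves. The image has $nm = 2$ pixels, so the double sum in Definition~\ref{def:discrepancy} ranges over all $4$ ordered pairs $(p,q)$, including the two diagonal pairs with $p=q$. The two diagonal pairs contribute $D(a,a) = D(b,b) = 0$, while each of the two off-diagonal pairs contributes $D(a,b)$, using that both discrepancy measures are symmetric in their arguments. The normalization factor is $n^2 m^2 = 4$. Hence $GD(I) = \frac{1}{4}\bigl(0 + D(a,b) + D(a,b) + 0\bigr) = \frac{1}{2} D(a,b)$.

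Combining the two computations gives $LD(I) = D(a,b) = 2\cdot\frac{1}{2} D(a,b) = 2 GD(I)$, and since nothing in the argument distinguished the linear from the quadratic measure, the identity holds for the pair $LD_1,GD_1$ and the pair $LD_2,GD_2$ simultaneously.

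The only point that requires attention---and the place where an off-by-a-constant error could creep in---is the interplay between counting \emph{ordered} pairs, so that the single edge is effectively counted twice in the global sum, and the inclusion of the vanishing $p=q$ terms in the normalization $n^2m^2$; once these two conventions are handled consistently the factor of $2$ falls out immediately. This clean factor is precisely what makes the two-pixel image a convenient base object, presumably to be combined later with a decomposition of a general image into its adjacent pairs in order to establish Proposition~\ref{pro:half} for quadratic discrepancy.
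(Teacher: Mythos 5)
Your computation is correct and is essentially the same as the paper's: both evaluate $LD$ and $GD$ directly on the two-pixel image, with the factor of $2$ coming from the two vanishing diagonal terms in the sum over the $n^2m^2=4$ ordered pairs. The paper writes out only the quadratic case and remarks that the linear case is similar, whereas you handle both at once, which is a harmless stylistic difference.
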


\begin{proof}
We prove the proposition for quadratic discrepancy. The proof for linear discrepancy is similar.

Let the two pixels be $p$ and $q$. Then $LD_2(I) = (x_p - x_q)^2$, whereas
$$GD_2(I) = \frac{1}{4}\left((x_p - x_p)^2 + (x_p - x_q)^2 + (x_q - x_p)^2 + (x_q - x_q)^2\right) =  \frac{1}{2}LD_2(I).$$
\end{proof}

Given an image $I$, an {\em equipartition} $P$ of $I$ partitions the set of its pixels into disjoint equal size subsets. The global discrepancy of an equipartition of $I$, denoted by $GD(P)$, is the average of the global discrepancies of its parts.

\begin{lemma}
\label{lem:equipartition}
For every image $I$ and every equipartition $P$ of $I$, $GD_2(P) \le GD_2(I)$.
\end{lemma}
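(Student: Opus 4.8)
The plan is to reinterpret quadratic global discrepancy as (twice) an empirical variance, and then appeal to the standard decomposition of variance into within-group and between-group components. First I would record the elementary identity that for any finite set $S$ of pixels with empirical mean $\mu_S$,
$$GD_2(S) = \frac{1}{|S|^2}\sum_{p,q\in S}(x_p-x_q)^2 = \frac{2}{|S|}\sum_{p\in S}(x_p-\mu_S)^2 = 2\,\mathrm{Var}(S),$$
where $\mathrm{Var}(S)$ is the variance of the pixel values in $S$ under the uniform distribution on its pixels. This is a one-line expansion, since $\sum_{p,q\in S}(x_p-x_q)^2 = 2|S|\sum_{p}x_p^2 - 2(\sum_{p}x_p)^2$, and dividing by $|S|^2$ gives $2\,\mathrm{Var}(S)$. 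Applying the identity to the whole image gives $GD_2(I)=2\,\mathrm{Var}(I)$, and applying it to each part $S_i$ of the equipartition gives $GD_2(S_i)=2\,\mathrm{Var}(S_i)$.

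Next, since the equipartition $P$ has $t$ parts all of the same size $s=|I|/t$, the definition of $GD_2(P)$ as the plain average of the parts' discrepancies coincides with twice the average of the within-part variances:
$$GD_2(P)=\frac{1}{t}\sum_{i=1}^{t}GD_2(S_i)=2\cdot\frac{1}{t}\sum_{i=1}^{t}\mathrm{Var}(S_i).$$
Here the equal-size assumption is doing the real work: the quantity $\frac{1}{t}\sum_i\mathrm{Var}(S_i)$ is exactly the within-group component of $\mathrm{Var}(I)$ in the law of total variance, precisely because each part carries equal weight $\frac{1}{t}=\frac{s}{ts}$.

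Finally I would invoke the law of total variance (the ANOVA decomposition). Writing $\mu_i$ for the mean of $S_i$ and $\mu$ for the overall mean, equal part sizes give $\mu=\frac{1}{t}\sum_i\mu_i$ and hence
$$\mathrm{Var}(I)=\underbrace{\frac{1}{t}\sum_{i=1}^{t}\mathrm{Var}(S_i)}_{\mathrm{within}}+\underbrace{\frac{1}{t}\sum_{i=1}^{t}(\mu_i-\mu)^2}_{\mathrm{between}}\ge\frac{1}{t}\sum_{i=1}^{t}\mathrm{Var}(S_i),$$
the between-group term being a sum of squares and hence nonnegative. Multiplying by $2$ yields $GD_2(I)\ge GD_2(P)$, as desired. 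The decomposition itself is a routine computation: split $\sum_{p\in S_i}(x_p-\mu)^2$ as $\sum_{p\in S_i}(x_p-\mu_i)^2+s(\mu_i-\mu)^2$, the cross term vanishing because $\sum_{p\in S_i}(x_p-\mu_i)=0$.

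The step I expect to require the most care is the bookkeeping of normalizations rather than any deep inequality. A naive attempt that merely splits the full double sum $\sum_{p,q\in I}(x_p-x_q)^2$ into within-part and cross-part contributions and discards the (nonnegative) cross-part terms does \emph{not} suffice: because $GD_2(P)$ divides each part's sum by $s^2$ while $GD_2(I)$ divides by $(ts)^2$, one actually needs the cross-part contribution to dominate $t-1$ times the within-part contribution, which is not evident from mere nonnegativity. The variance decomposition supplies exactly this stronger relation for free, and the equal-size hypothesis is what lets the plain average in the definition of $GD_2(P)$ line up with the size-weighted within-group variance. (No comparably clean identity holds for linear discrepancy, which is consistent with the lemma being stated only for $GD_2$.)
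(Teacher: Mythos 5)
Your proof is correct, but it takes a different route from the paper's. The paper treats $GD_2(I)$ and $GD_2(P)$ as quadratic forms $\frac{1}{n^2}x^TAx$ and $\frac{1}{dn}x^TBx$, where $A$ is the Laplacian of the complete graph on all pixels and $B$ is the block-diagonal Laplacian of the disjoint union of complete graphs on the parts; after splitting off the all-ones component, it compares the two via eigenvalues ($A$'s nonzero eigenvalues all equal the number of pixels, while $B$'s largest eigenvalue is only the part size $d$). Your argument is the elementary ANOVA route: the identity $GD_2(S)=2\,\mathrm{Var}(S)$ plus the law of total variance, which exhibits $GD_2(I)-GD_2(P)$ exactly as twice the between-group variance $\frac{1}{t}\sum_i(\mu_i-\mu)^2$. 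The two are close cousins --- your identity $GD_2(S)=2\,\mathrm{Var}(S)$ is precisely the spectral fact that the complete-graph Laplacian acts as $|S|$ times the identity on the orthogonal complement of the all-ones vector --- but your version avoids linear algebra entirely and gives the exact value of the deficit rather than a one-sided eigenvalue bound, while the paper's spectral framing generalizes more readily to other graph structures (it is reused implicitly elsewhere in Section 4.1). Both proofs use the equal-size hypothesis in the same essential place: it is what makes the unweighted average over parts coincide with the correctly size-weighted within-group term (in the paper, it is what makes the single normalization $\frac{1}{dn}$ and the uniform block eigenvalue $d$ work). Your closing remark about why merely discarding cross-part terms fails is accurate and worth keeping.
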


\begin{proof}
For convenience of notation, let $n$ denote here (and only here) the total number of pixels in the image $I$, and suppose that the equipartition $P$ partitions the set of pixels into $n/d$ subsets, each with $d$ pixels. Number the pixels from~1 to $n$, with each subset occupying $d$ consecutive numbers. Consider now two $n$ by $n$ symmetric matrices. (These matrices are so called {\em Laplacian} matrices of graphs associated with the way discrepancy is being computed.) Matrix $A$ has $n-1$ along its diagonal, and all other entries are $-1$. Matrix $B$ is a block matrix with $n/d$ blocks of size $d$ along the diagonal. Each block has $d-1$ along its diagonal, and $-1$ elsewhere in the block. Outside the diagonal blocks, the matrix $B$ is all~0.

Let $x$ be the vector of values for the pixels of $I$. We think of $x$ as a column vector, and $x^T$ is its transposed row vector. Then $GD_2(I) = \frac{1}{n^2}x^TAx$, and the average discrepancy of the partition is $GD_2(P) = \frac{1}{dn}x^TBx$. Decompose $x$ into two components, $\alpha y + z$, where $y$ is the all~1 vector, $\alpha$ is the average value of $x$, and $z$ is a vector orthogonal to $y$. Observe that $y$ is an eigenvector of eigenvalue~0 both for $A$ and for $B$. Hence $GP_2(I) = \frac{1}{n^2}z^TAz$ and $GD_2(P) = \frac{1}{dn}z^TBz$. Observe that all eigenvalues of $A$, except for the unique~0 eigenvalue, have value $n$. Hence  $GP_2(I) = \frac{1}{n^2}z^TAz = \frac{1}{n}|z|^2$ (where $|z|$ is the norm of $z$). As for $B$, it has $n/d$ eigenvalues of~0, and each block contributes $d-1$ eigenvalues of value~$d$. Hence $GD_2(P) = \frac{1}{dn}z^TBz \le \frac{1}{n}|z|^2$. This establishes that $GD_2(P) \le GD_2(I)$, as desired.
\end{proof}

We can now prove the quadratic discrepancy part Proposition~\ref{pro:half}.

\begin{proof}
Suppose for simplicity that $n$ is even. Observe that the grid graph is nearly 4-regular. Add one edge to each row making that row into a cycle, and one edge to each column making the column into a cycle. Thus $2n$ edges are added, but they form only a $1/n$ fraction of the total number of edges, explaining the $(1/n)$ error term in the statement of Proposition~\ref{pro:half}. Consider now~4 different partitions of the grid (which by now is a torus), each into $n^2/2$ parts: $P_1$ takes all even pairs in the rows, $P_2$ takes all odd pairs in the rows, $P_3$ takes all even pairs in the columns, $P_4$ takes all odd pairs in the columns. By Lemma~\ref{lem:equipartition}, $GD_2(P_i) \le GD_2(I)$ for every $1 \le i \le 4$. Hence the average global discrepancy of a pair of adjacent pixels is at most $GD_2(I)$. Proposition~\ref{pro:2pixels} then implies that $LD_2(I) \le 2GD_2(I)$.
\end{proof}

\subsection{Lower bounds on discrepancy}

The following proposition shows that the bounds in Theorem~\ref{thm:local} are best possible.

\begin{proposition}
\label{pro:local}
For some environment $U$ the average local discrepancy satisfies $LD(U,k) \ge \frac{1}{k}$.
\end{proposition}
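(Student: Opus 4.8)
The plan is to exhibit a single explicit environment whose per-scale local discrepancies are concentrated entirely at the finest scale. By Definition~\ref{def:DU}, together with the fact that $D$ chooses the scale $\ell$ uniformly in $[0,k-1]$ and the location uniformly, we have $LD(U,k) = \frac{1}{k}\sum_{\ell=0}^{k-1} LD(\ell)$, where $LD(\ell)$ is the average local discrepancy over images of scale $\ell$. Hence it suffices to construct $U$ with $\sum_{\ell=0}^{k-1} LD(\ell) \ge 1$, and the cleanest route is to make exactly one scale contribute $1$ while every other scale contributes $0$. Concretely, I would take $U$ to be the \emph{checkerboard} environment on the $N$ by $N$ torus, in which the cell $(i,j)$ has intensity $I_{(i,j)} = (i+j) \bmod 2$; I assume $N$ even so that the pattern is consistent under the wrap-around of the torus.

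First I would analyze scale $\ell = 0$. Here pixels coincide with cells, so every image sampled from $D$ is simply an $n$ by $n$ window of the checkerboard. Every pair of adjacent pixels differs by exactly $1$, so $(x_p - x_q)^2 = 1$ (and likewise $|x_p - x_q| = 1$) on every edge, giving $LD(0) = 1$ for both linear and quadratic discrepancy, independently of the random location.

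Next I would analyze a coarse scale $\ell \ge 1$. Now each pixel is the average of a $2^{\ell}$ by $2^{\ell}$ block of cells. Since $2^{\ell}$ is even, each row of such a block consists of $2^{\ell}$ consecutive cells whose values alternate between $0$ and $1$, hence contains equally many $0$-cells and $1$-cells; the block therefore has average intensity exactly $\frac{1}{2}$, regardless of how it is positioned on the torus. Consequently every pixel has value $x_p = \frac{1}{2}$, every edge has discrepancy $0$, and $LD(\ell) = 0$ for every $\ell \ge 1$ and every location. Combining the two cases gives $\sum_{\ell=0}^{k-1} LD(\ell) = 1$, so $LD(U,k) = \frac{1}{k}$, which is in particular at least $\frac{1}{k}$ and matches the upper bound of Theorem~\ref{thm:local} exactly.

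There is no genuine obstacle in this argument; the one point that requires care is the verification that the pixel value is \emph{exactly} $\frac{1}{2}$ at every coarse scale and for every offset, since this is precisely what forces all scales except $\ell = 0$ to contribute nothing to the sum. This in turn relies on the block side length $2^{\ell}$ being even for $\ell \ge 1$, and on $N$ being even so that the checkerboard wraps around the torus without a seam. The whole design principle is to engineer an environment that is maximally discrepant at one scale yet perfectly constant (and hence smooth) at all coarser scales, so that the uniform average over the $k$ scales yields exactly $\frac{1}{k}$.
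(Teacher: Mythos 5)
Your proposal is correct and follows essentially the same route as the paper: the checkerboard environment, with local discrepancy~$1$ at scale $\ell=0$ and~$0$ at every coarser scale because each $2^{\ell}\times 2^{\ell}$ block averages to exactly $\frac{1}{2}$, so the uniform average over scales gives $LD(U,k)=\frac{1}{k}$. Your write-up simply spells out the parity verification (even block side, even $N$ for the torus wrap-around) that the paper leaves implicit.
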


\begin{proof}
The following $U$ attains $LD(U,k) = \frac{1}{k}$. The cells of $U$ form a checkerboard pattern with alternating 0/1 values. In the scale $\ell=0$ the local discrepancy is~1 (regardless of the location of $I$), and in every other scale local discrepancy is~0. As $Pr[\ell = 0] = \frac{1}{k}$, the proposition follows.
\end{proof}

We now prove Proposition~\ref{pro:global} concerning global discrepancy.

\begin{proof}
Partition $U$ into {\em mega-cells} where a mega-cell is a $2^k$ by $2^k$ array of cells. Within a mega-cell, every cell has the same intensity. The mega-cells are arranged in a checkerboard pattern, with alternating 0/1 intensities.

The distribution $D$ selects a scale $\ell \in [0, k-1]$ uniformly at random. Observe that already when $\ell = k-1$, a random pixel has constant probability of being entirely contained in a mega cell, and this probability tends to~1 at an exponential rate as $\ell$ decreases. Moreover, when $n$ is even, as long as $\ell \ge k - \log n + 1$, exactly half the cells (not mega cells) contained in an image have intensity~1, and the other half has intensity~0. The combination of these two facts implies that roughly half the pixels of the image have value~1, and roughly half have value~0, giving $GD(I) \simeq \frac{1}{2}$. As this happens at roughly $\log n$ scales out of $k$ possible choices of scales, $E_{I \leftarrow D} GD(I) \simeq \frac{\log n}{2k}$, as desired.
\end{proof}

The bound in Proposition~\ref{pro:global} is nearly best possible, though this will not be proved in this manuscript, because we only need the direction of the inequality that is stated in the proposition.

We now prove Proposition~\ref{pro:sqrt} concerning linear discrepancy.

\begin{proof}
We shall not try to optimize the constant $c$ in the following proof.

In our proof it will be convenient to allow the intensities of cells to be in the range $[-2\sqrt{\log k}, 2\sqrt{\log k}]$, where for simplicity we assume that $2\sqrt{\log k}$ is integer. Clearly, by scaling intensities can be adjusted to lie in the range $[0,1]$, while losing a factor of $4\sqrt{\log k}$ in the value of $LD_1$.

Let the $N$ by $N$ environment (with $N > 2^k$ a power of~2) be such that the intensity of a cell $(i,j)$ depends only on $j$ but not on $i$. Specifically, the intensity of cell $(i,j)$ is computed as follows. Write $j$ in binary notation, but with $-1$ replacing~0. Consider only the $k$ least significant bits in this notation. This gives some string $r \in \{-1,1\}^k$. For cells that we refer to as {\em balanced} the intensity of the cell is simply the sum of bits in $r$. However, there are cells that we refer to as {\em extreme}. Those are the cells for which for some $q \le k$ the sum of the first $q$ bits in $r$ is either $-2\sqrt{\log k}$ or $2\sqrt{\log k}$. For these extreme cells their intensity is the value of the corresponding prefix (hence the maximum allowed absolute value for the intensity). By Kolmogorov's inequality for partial sums of independent $\pm 1$ random variables, at most one quarter of the cells are extreme.

Consider now a random pixel $p$ at an arbitrary scale $0 \le \ell < k$. For all the cells within it, the corresponding $r$ share the same $(k-\ell)$-prefix. Observe that when this prefix by itself is not extreme (namely, its sum of values never hits neither $-2\sqrt{\log k}$ nor $2\sqrt{\log k}$ -- this happens with probability at least $\frac{3}{4}$) then the value of the pixel (the average over all cells that it contains) is precisely the sum of values of the $(k-\ell)$-prefix. Of the four pixels adjacent to $p$, one of them is adjacent to it horizontally and agrees with it on an $(k - \ell -1)$-prefix and differs on bit $(k - \ell)$. The linear discrepancy between these two pixels is~2.

This implies that with probability at least $\frac{3}{16}$ the linear discrepancy is at least~2, which after scaling the intensities to lie in $[0,1]$ shows that $LD_1 \ge \frac{3}{32\sqrt{\log k}}$.
\end{proof}

\subsection{Proofs of main theorems}

Proof of Theorem~\ref{thm:local}.

\begin{proof}
In an image of scale $\ell$, the side length of a pixel is $2^{\ell}$ cells. Using distribution $D$, every scale $\ell \in \{0, \ldots, k-1\}$ is chosen with equal probability, and given a scale $\ell$, every two adjacent pixels of size $2^{\ell}$ are equally likely to be in the image. We need to prove that the expectation of the discrepancy $(x_p - x_q)^2$ of two adjacent pixels (chosen at random from an image chosen from distribution $D$) is at most $\frac{1}{k}$. It suffices to prove it for pairs of pixels adjacent horizontally, and by symmetry, the same proof will apply to pairs of pixels adjacent vertically. Hence for the rest of the proof, pixels are considered to be adjacent if and only if they are adjacent horizontally. We may envision a pair of adjacent pixels as a domino piece. We describe now a method of sampling uniformly at random a domino piece.

Consider a ``window" $W$ of $U$ with $2^k$ columns and $2^{k-1}$ rows. This window is equivalent to a domino piece of scale $k-1$. Subdivide each of its pixels of scale $k-1$ into four pixels of scale $k-2$. These pixels are arranged as two domino pieces of scale $k-2$. Continue subdividing recursively, where for every $\ell \ge 1$, every pixel of scale $\ell$ gives two domino pieces of scale $\ell - 1$. Hence in scale $\ell$ there are $4^{k-1-\ell}$ disjoint domino pieces.
Now to sample a random domino piece, choose $W$ at random, choose a scale $\ell$ uniformly at random, and within $W$ choose a domino piece of scale $\ell$ uniformly at random.

To compute the discrepancy of a domino piece of scale $\ell$, one needs first to average the value of its left pixel $p$ (by summing all cells and dividing by $2^{2\ell}$) getting a value $x_p$, to average the value of its right pixel $q$ getting a value $x_q$, and compute $(x_p - x_q)^2$.

Let $W_{\ell}$ denote the set of domino pieces of scale $\ell$ in $W$. Let $LD(W)$ denote the weighted average local discrepancy (over horizontal pairs) in $W$, where the weights are such that each scale is equally likely to be chosen. We have:

\begin{equation}\label{eq:LD}
LD(W) = \frac{1}{k}\sum_{\ell = 0}^{k-1} 4^{\ell+1 - k}\sum_{(p,q) \in W_{\ell}} (x_p - x_q)^2
\end{equation}

The intensities of cells in $W$ is a function $I$ from $2^{k-1} \times 2^k$ cells of $W$ to $[0,1]$. Denoting cells by $c$, the average intensity $2^{1 - 2k} \sum_{c\in W} I(c)$ will be denoted by $\mu$, and the average of the squares of the intensities $2^{1 - 2k} \sum_{c\in W} (I(c))^2$ will be denoted by $w^2$.
We now represent the function $I$ in an orthonormal basis that is very much related to the Haar basis, though not identical to it. The number of basis vectors needs to be $2^{2k-1}$ (matching the number of cells in $W$), but we shall specify only some of the basis vectors. The set of basis vectors that we specify will be referred to as the {\em domino partial basis}. One basis vector $v_0$ has value $\frac{1}{2^{k}\sqrt{2}}$ on all cells of $W$. In addition, each domino piece in $W$ represents a basis vector as follows. Given the scale $\ell$ of the domino piece, in its left pixel (composed of $2^{2\ell}$ cells), each cell has value $\frac{1}{2^{\ell}\sqrt{2}}$, each cell in the right pixel has value $-\frac{1}{2^{\ell}\sqrt{2}}$, and the cells not covered the domino piece have value~0.  Hence the norm of every vector in the domino partial basis is~1, and every two vectors are orthogonal.

The inner product of $I$ with a basis vector that corresponds to a domino piece $(p,q)$ at scale $\ell$ is precisely $\frac{2^{\ell}}{\sqrt{2}}(x_p - x_q)$. This is the coefficient of the function $I$ according to the basis vector corresponding to the domino piece.  The square of this coefficient is $2^{2\ell-1}(x_p - x_q)^2$. For $v_0$, the squared value of the coefficient can readily be seen to be $2^{2k-1}\mu^2$. The sum of squares of all coefficients is at most the square of the norm of $I$ (if we had a complete basis, they would be equal, by Parseval's identity), and hence:

\begin{equation}\label{eq:parseval}
2^{2k-1}\mu^2 + \sum_{\ell = 0}^{k-1} \sum_{(p,q) \in W_{\ell}} 2^{2\ell-1}(x_p - x_q)^2 \le \sum_{(i,j)\in W} (W_{i,j})^2 = 2^{2k-1} w^2
\end{equation}

Dividing both sides of Equation~(\ref{eq:parseval}) by $2^{2k-1}$, we obtain

\begin{equation}\label{eq:parseval1}
\mu^2 + \sum_{\ell = 0}^{k-1} \sum_{(p,q) \in W_{\ell}} 4^{\ell-k}(x_p - x_q)^2 \le w^2
\end{equation}

Combining Equation~(\ref{eq:parseval1}) with~(\ref{eq:LD}) we obtain that $LD(W) \le \frac{4}{k}(w^2 - \mu^2)$. As the intensities are in the range $[0,1]$, necessarily $w^2 \le \mu$. The expression $\mu - \mu^2$ is maximized when $\mu = \frac{1}{2}$, and then it evaluates to $\frac{1}{4}$. Hence $LD(W) \le \frac{1}{k}$, as desired.
\end{proof}

For the proof of Theorem~\ref{thm:rho} we use the following notation. Let $x_{\ell} = LD_2(\ell)$ for $0 \le \ell \le k-1$ and $x_{\ell} = 0$ for $\ell \ge k$.

\begin{lemma}
\label{lem:local}
For every $0 \le \ell \le k-1$, $2GD_2(\ell) \ge \sum_{\ell' = \ell}^{\ell + \log n - 1} x_{\ell'}$.
\end{lemma}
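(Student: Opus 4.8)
The plan is to express $GD_2(\ell)$ through the variance of the pixel values of a scale-$\ell$ image and then to decompose this variance across scales by repeatedly coarsening the image. First, for a fixed scale-$\ell$ image $I$ with pixel values $\{x_p\}$, observe that $GD_2(I) = \frac{1}{n^4}\sum_{p,q}(x_p-x_q)^2 = 2\,\mathrm{Var}(\{x_p\})$, so that $GD_2(\ell) = 2\,E_I[\mathrm{Var}(\{x_p\})]$. Since $n$ is a power of~$2$, an image of scale~$\ell$ occupies exactly the cells of a single pixel of scale $\ell + \log n$, and I can coarsen it in $\log n$ steps: at step $t$ (for $0 \le t \le \log n - 1$) I merge each aligned $2\times 2$ block of scale-$(\ell+t)$ pixels into one scale-$(\ell+t+1)$ pixel whose value is the average of the four. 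Writing $W_t(I)$ for the average, over all such blocks, of the variance of the four pixel values inside a block, the law of total variance applied to this equipartition (in the spirit of Lemma~\ref{lem:equipartition}) telescopes to give $\mathrm{Var}(\{x_p\}) = \sum_{t=0}^{\log n - 1} W_t(I)$, the final level being a single pixel of variance~$0$.

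Next I would relate each $W_t(I)$ to local discrepancy at scale $\ell+t$. For a block with values $a,b,c,d$ in reading order, the variance of the four equals $\frac{1}{16}$ times the sum of the six squared pairwise differences, which is at least $\frac14$ times the average of the four \emph{adjacent} squared differences $(a-b)^2,(c-d)^2,(a-c)^2,(b-d)^2$; discarding the two diagonal terms only strengthens the inequality. Calling this within-block adjacent average $A_t(I)$, we get $W_t(I) \ge \frac14 A_t(I)$, and hence $GD_2(\ell) = 2\sum_{t} E_I[W_t(I)] \ge \frac12 \sum_{t=0}^{\log n-1} E_I[A_t(I)]$.

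Finally, the crucial step is to show that $E_I[A_t(I)] = LD_2(\ell+t)$. The within-block adjacent pairs are pairs of horizontally- or vertically-adjacent scale-$(\ell+t)$ pixels, but only those whose left (resp.\ top) member sits at an even index of the coarsening grid; a priori these are only ``half'' of all adjacent pairs, which is the same even/odd obstruction handled in Proposition~\ref{pro:half}. Here I would use that the image location is a uniformly random cell of $U$: for a fixed within-block horizontal pair, the absolute column of its left member has the form $j + s\,2^{\ell+t+1}$ for some offset $s$, which is uniform over all cells when $j$ is, exactly the distribution governing $LD_2(\ell+t)$. Averaging over the block index and over the image location therefore makes the within-block horizontal (resp.\ vertical) pairs distributed like all horizontally (resp.\ vertically) adjacent scale-$(\ell+t)$ pixel pairs, so $E_I[A_t(I)] = LD_2(\ell+t)$. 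Combining the estimates yields $2GD_2(\ell) \ge \sum_{t=0}^{\log n-1} LD_2(\ell+t) \ge \sum_{\ell'=\ell}^{\ell+\log n -1} x_{\ell'}$, where the last inequality is an equality for $\ell' < k$ and uses $x_{\ell'}=0 \le LD_2(\ell')$ for $\ell' \ge k$. The main obstacle is precisely this last averaging argument: confirming that the deterministic even/odd alignment of the coarsening grid inside a single image is washed out by the random placement of the image, so that the aligned within-block pairs faithfully represent all adjacent pairs at each scale.
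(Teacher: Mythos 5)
Your proof is correct, and it reaches the stated constant ($2GD_2(\ell)\ge\sum x_{\ell'}$) by a route that is genuinely different from the paper's. The paper derives the lemma as a byproduct of the machinery in the proof of Theorem~\ref{thm:local}: it takes a random half-image window $W$ (a domino of scale $\ell+\log n -1$), bounds $E[GD_2(W)]\le GD_2(\ell)$ via the equipartition Lemma~\ref{lem:equipartition}, and then reads off $\sum_{\ell'} x_{\ell'}\le 4E[w^2-\mu^2]=2E[GD_2(W)]$ from the Parseval inequality for the domino partial basis, in which the adjacent-pair differences appear exactly as orthogonal coefficients (the only loss being the incompleteness of the basis). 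You instead work with the full image, use the exact telescoping of variance under $2\times 2$ quadtree coarsening (law of total variance, with no loss at this stage), and pay instead in the step where within-block variance is lower-bounded by the four adjacent pairs after discarding the two diagonal pairs; the two sources of slack are different but yield the same constant. Your approach is more self-contained (it does not need the half-window device or Lemma~\ref{lem:equipartition}), at the cost of having to make explicit the alignment issue that you correctly identify as the crux: the within-block pairs are only the "aligned" half of all adjacent pairs at scale $\ell+t$, and your argument that the uniformly random image location on the torus makes the left endpoint of a random aligned pair uniform over all cells---hence distributed identically to the pairs defining $LD_2(\ell+t)$---is exactly right (the paper relies on the same translation invariance, only implicitly, when averaging over the placement of $W$). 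One small bookkeeping point worth stating explicitly: for $\ell'\ge k$ the quantity $LD_2(\ell')$ is not defined in the model, but your argument only needs the extended quantity (expected discrepancy of a uniformly placed adjacent pair of scale-$\ell'$ pixels) to be nonnegative, which you note.
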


\begin{proof}
The proof of Lemma~\ref{lem:local} is implicit in our proof of Theorem~\ref{thm:local}. Consider a random window $W$ of $U$ with $n2^{\ell - 1}$ rows and $n2^{\ell}$ columns. It can be thought of as half an image $I$ at scale $\ell$, and being half an image, Lemma~\ref{lem:equipartition} implies that the expectation over choice of random $W$ satisfies $E[GD_2(W)] \le E[GD_2(I)] = GD_2(\ell)$. The proof of Theorem~\ref{thm:local} implies that $\sum_{\ell' = \ell}^{\ell + \log n - 1} x_{\ell'} \le 4E[w^2 - \mu^2]$, where $\mu$ is the average value of a pixel in $W$ and $w^2$ is the average squared value. As $GD_2[W] = 2(w^2 - \mu^2)$, the lemma follows.
\end{proof}


We now prove Theorem~\ref{thm:rho}.

\begin{proof}
Observe that convexity of the function $x^2$ implies that $LD_1(\ell) \le \sqrt{x_{\ell}}$. Hence to prove Theorem~\ref{thm:rho} we shall bound the maximum possible value of $\sum_{\ell=0}^{k-1} \sqrt{x_{\ell}}$. As $x_{\ell} = 0$ for $\ell \ge k$, this is the same as bounding the maximum possible value of $\sum_{\ell \ge 0} \sqrt{x_{\ell}}$. Relaxing the constraint that $x_{\ell} = 0$ for $\ell \ge k$, we get the following mathematical program.

{\em Maximize} $\sum_{i \ge 0} \sqrt{x_i}$ {\em subject to:}

\begin{enumerate}

\item $x_i \ge 0$.

\item $\sum x_i \le 1$.

\item $2\alpha x_{i} \ge \sum_{j = i}^{i + \log n - 1} x_j$.

\end{enumerate}

Constraint~2 is a consequence of Theorem~\ref{thm:local}. Constraint~3 is a consequence of Lemma~\ref{lem:local} together with the premise of Theorem~\ref{thm:rho}.

Consider a feasible (not necessarily optimal) solution to the above mathematical program of the form $x_i = (1-p)p^i$, for some $0 < p < 1$. Then constraint~1 is necessarily satisfied. Constraint~2 is satisfied with equality because $\sum_{i \ge 0} (1-p)p^i = (1-p)\sum_{i \ge 0} p^i = \frac{1-p}{1-p} = 1$. As for Constraint~3, we require that $2\alpha (1-p)p^i \ge \sum_{j = i}^{i + \log n - 1} (1-p)p^j$. Dividing both sides by $(1-p)p^i$ we get an upper bound on the maximum possible value of $p$, implied by the inequality $2\alpha \ge \sum_{j=0}^{\log n + 1} p^j = \frac{1 - p^{\log n}}{1-p}$.

If $x_i$ is of the form $(1-p)p^i$, then in order to maximize $\sum_{i \ge 0} \sqrt{x_i}$ we need to choose $p$ as large as possible. This follows because

$$\sum_{i \ge 0} \sqrt{x_i} = \sum_{i \ge 0} \sqrt{1-p}(\sqrt{p})^i = \frac{\sqrt{1 - p}}{1 - \sqrt{p}} = \frac{1 + \sqrt{p}}{\sqrt{1-p}}$$
is increasing with $p$.

Recall that $p$ needs to satisfy the constraint:

$$\frac{1 - p^{\log n}}{1-p} \le 2\alpha$$

In particular, when $n$ tends to infinity, we have that $p \le 1 - \frac{1}{2\alpha}$.
Under the solution $p = 1 - \frac{1}{2\alpha}$ the value of the objective function of the mathematical program has the following simple form:

$$\sum_{i \ge 0} \sqrt{x_i} = \frac{1 + \sqrt{p}}{\sqrt{1-p}} = \sqrt{2\alpha} + \sqrt{2\alpha - 1}$$

It remains to show that the solution $x_i = (1-p)p^i$ is not only feasible but also optimal. Hence fix $\alpha > 1$ and integer $\log n > 2\alpha$ and let $0 < p < 1$ be the solution of $\frac{1 - p^{\log n}}{1-p} = 2\alpha$. (The inequality $\log n > 2\alpha$ is required in order to ensure the existence of such a $p$.)

Consider an optimal solution $X = x_0, x_1, \ldots$, and for the sake of contradiction suppose that there is some $i$ (we take the smallest one) for which $x_i \not= (1-p)p^i$. We consider two cases.

\begin{enumerate}

\item
$x_i < (1-p)p^i$. Let $i \le j \le i + \log n - 1$ be largest such that $x_j < (1-p)p^j$. Constraint~3 implies that necessarily $j > i$. Likewise, Constraint~3 implies that $\sum_{\ell = i+1}^j x_{\ell} < \sum_{\ell=i+1}^j (1-p)p^{\ell}$. The same argument can be repeated with $j$ replacing $i$, and thereafter repeated indefinitely.
By minimality of $i$ we have that $\sum_{\ell = 0}^{i-1} x_{\ell} = \sum_{\ell = 0}^{i-1} (1-p)p^{\ell}$. Hence we have that $\sum_{\ell \ge 0} x_{\ell} < \sum (1-p)p^{\ell} = 1$. This means that in the solution $X$ can increase $x_i$ (in fact, at least up to $(1-p)p^i$) without violating any of the constraints, thus contradicting the optimality of $X$.

\item
$x_i > (1-p)p^i$. An argument analogous to Case~1 above implies that it cannot be that for every $j$ Constraint~3 is attained with equality, as then Constraint~2 will be violated. Let $j$ be the smallest index for which there is slackness in Constraint~3,  and let $\epsilon_1$ be the amount of slackness. Denote $\epsilon_2 = x_j - x_{j+1}$ and suppose that $\epsilon_2 > 0$. In this case, modify the solution $X$ to a new solution $X'$ in which $x_j$ is replaced by $x'_j = x_j - \frac{1}{2}\min[\epsilon_1,\epsilon_2]$ and $x_{j+1}$ is replaced by $x'_{j+1} = x_{j+1} + \frac{1}{2}\min[\epsilon_1,\epsilon_2]$. One can easily verify that $X'$ is feasible and gives a higher value than $X$ does for the objective function (due to concavity of $\sqrt{x}$). This contradicts the assumed optimality of $X$.

It remains to deal with the case that $\epsilon_2 \le 0$. Below we establish that in this case there is some other index $q > j$ such that Constraint~3 has slackness for $x_q$, and moreover, $x_q > x_{q+1}$. Then the above argument can be applied with $q$ replacing $j$, completing the proof.

Observe that there are only finitely many indices $q$ with $x_q \ge x_j$ (because $\sum x_i \le 1$). Let $q$ be the largest index such that $x_q \ge x_j$. By our assumption that $\epsilon_2 \le 0$ we have that $q > j$. Clearly $x_q > x_{q+1}$. We now show that Constraint~3 has slackness for $x_q$. There are two cases to consider.

\begin{enumerate}

\item For all $j \le i \le q$ it holds that $x_i \ge x_j$. In this case $q < j + \log n - 1$, because Constraint~3 (together with $\log n > 2\alpha$) implies that the average value of $x_i$ for $j \le i \le j+\log n - 1$ is less than $x_j$. It follows that the inequalities implied by Constraint~3, one for $x_j$ and one for $x_q$, overlap in some terms on the right-hand side. Moreover, for all the terms in which they differ, the right hand side for $x_j$ has strictly higher value (every term at least $x_j$) than for $x_q$ (every term strictly smaller than $x_j$). Since $x_q \ge x_j$, there must be slackness for $x_q$.

\item For some $j < i < q$ it holds that $x_i < x_j$. Let $i < q$ be the largest such index. Then repeat the argument above with the inequalities implied by Constraint~3, one for $x_i$ (instead of $x_j$) and one for $x_q$.

\end{enumerate}

\end{enumerate}

\end{proof}

\section{Discussion}
\label{sec:discussion}

One may think of our work as distinguishing between three concepts.

\begin{enumerate}

\item An array of pixels.

\item An image as defined in our abstract model. It depicts a portion of an environment, and may do so in one of several scales. No assumptions are made on the nature of the environment.

\item A natural image. The environment depicted needs to adhere to physical realities of our world, and the selection process of images may be biased, based on the goals of the person taking these images.

\end{enumerate}

The three main principles that underlie our probabilistic model of images are the following:

\begin{itemize}

\item The model assumes nothing about the nature of the environment $U$. As our results are positive (showing some level of smoothness), this aspect strengthens the applicability of our results.

\item There is no single scale in which a large fraction of the images are taken. If there was such a scale, then $U$ can be arranged to have large local discrepancy at this scale (e.g., a checkerboard pattern), and on average images would not be smooth.

\item The location of the image is chosen independently of the content of $U$ -- for a given scale, there is no correlation between the smoothness of $U$ at a certain location and the probability that the image is taken at this location. 

\end{itemize}

We showed that a key statistical property associated with natural images, that of smoothness, already manifests itself to some extent in the abstract model for images. Our study is quantitative, and our quantitative results uncover rather subtle and perhaps counter-intuitive effects. Let us recap one of our conclusions. Arguably, noticeable local correlation in an image (namely, having quadratic local discrepancy that is small compared to the quadratic global discrepancy) is by itself an indication for smoothness. Theorem~\ref{thm:rho} (contrasted with proposition~\ref{pro:sqrt}) shows that the {\em absence} of local correlation (setting $\alpha$ close to~1) leads to improved upper bounds on the expected linear local discrepancy of random images.

Given quantitative values of smoothness of natural images, our work may allow one to assess how much of this value should be attributed already to the abstract image model, and then only the residual smoothness needs to be explained by properties of the natural world.

Our results become more significant as the number $k$ of scales grows. In natural images, due to physical constraints of the real world, $k$ cannot grow indefinitely, and hence we attempted to present our results not only in an asymptotic sense (e.g., $O(1/k)$), but also to provide explicit bounds on the leading constants involved. In particular, the premise of Theorem~\ref{thm:rho} was chosen in a way that would keep these constants small. The proof technique of Theorem~\ref{thm:rho} (using a linear program to upper bound the linear discrepancy) is versatile enough to extend to weaker premises, at the cost of resulting in higher leading constants in the $O(1/k)$ upper bound.

\subsection*{Acknowledgements}

The author thanks Ronen Basri, Anat Levin and Boaz Nadler for helpful discussions on natural image statistics.

\end{document}